\documentclass[10pt,twocolumn,letterpaper]{article}

\usepackage{cvpr}
\usepackage{times}
\usepackage{epsfig}
\usepackage{graphicx}
\usepackage{amsmath}
\usepackage{amssymb}

\usepackage{amsthm}
\newtheorem{lemma}{Lemma}
\newtheorem{prop}{Proposition}
\newtheorem{mydef}{Definition}
\newtheorem{remark}{Remark}

\usepackage{booktabs} %

\usepackage{nopageno}


\usepackage{dblfloatfix}

\usepackage[tight,normalsize,sf,SF]{subfigure}

\usepackage{soul}
\newcommand{\add}[1] {{#1}} %

\newtheorem{problem}{Problem}

\DeclareMathOperator{\im}{im}
\DeclareMathOperator{\diag}{diag}

\newcommand{\R}{\mathbb{R}}

\newcommand{\zerovec}{\mathbf{0}}
\newcommand{\matI}{\mathbf{I}}
\newcommand{\matX}{\mathbf{X}}

\newcommand{\matQ}{\mathbf{Q}}

\newcommand{\matD}{\mathbf{D}}
\newcommand{\matU}{\mathbf{U}}

\newcommand{\matV}{\mathbf{V}}
\newcommand{\matW}{\mathbf{W}}

\newcommand{\matT}{\mathbf{T}}
\newcommand{\matZ}{\mathbf{Z}}
\newcommand{\matN}{\mathbf{N}}

\newcommand{\matSigma}{\mathbf{\Sigma}}

\newcommand{\matA}{\mathbf{A}}
\newcommand{\vect}{\mathbf{t}}

\newcommand{\vecz}{\mathbf{z}}
\newcommand{\vecu}{\mathbf{u}}
\newcommand{\vecx}{\mathbf{x}}

\newcommand{\equivalent}{\Leftrightarrow~}

\newcommand{\unif}{\mathcal{U}}

\usepackage[breaklinks=true,bookmarks=false]{hyperref}

\cvprfinalcopy %

\ifcvprfinal\pagestyle{empty}\fi

\begin{document}

\title{A Solution for Multi-Alignment by Transformation Synchronisation}

\author{
Florian Bernard\textsuperscript{1,2,3} 
  \and
Johan Thunberg\textsuperscript{2}  
  \and
Peter Gemmar\textsuperscript{3} 
  \and
Frank Hertel\textsuperscript{1} 
  \and
Andreas Husch\textsuperscript{1,2,3} 
  \and
Jorge Goncalves\textsuperscript{2}
\and
\\
\textsuperscript{1}Centre Hospitalier de Luxembourg, Luxembourg \\
{\tt\small \{bernard.florian,hertel.frank,husch.andreas\}@chl.lu}\\
\textsuperscript{2}Luxembourg Centre for Systems Biomedicine, University of Luxembourg, Luxembourg\\
{\tt\small \{johan.thunberg,jorge.goncalves\}@uni.lu}\\
\textsuperscript{3}Trier University of Applied Sciences, Germany\\
{\tt\small p.gemmar@hochschule-trier.de}
}

\maketitle
\begin{abstract}
The alignment of a set of objects by means of transformations plays an important role in computer vision. Whilst the case for only two objects can be solved globally, when multiple objects are considered usually iterative methods are used. In practice the iterative methods perform well if the relative transformations between any pair of objects are free of noise. However, if only noisy relative transformations are available (\eg due to missing data or wrong correspondences) the iterative methods may fail.

Based on the observation that the underlying noise-free transformations can be retrieved from the null space of a matrix that can directly be obtained from pairwise alignments, this paper presents a novel method for the synchronisation of pairwise transformations such that they are transitively consistent. 

Simulations demonstrate that for noisy transformations, a large proportion of missing data and even for wrong correspondence assignments the method delivers encouraging results.
\end{abstract}

\footnotetext{\copyright ~ 2015 IEEE. Personal use of this material is permitted. Permission from IEEE must be obtained for all other uses, in any current or future media, including reprinting/republishing this material for advertising or promotional purposes, creating new collective works, for resale or redistribution to servers or lists, or reuse of any copyrighted component of this work in other works.}

\section{Introduction}
The alignment of a set of objects by means of transformations plays an important role in the field of computer vision and recognition. For instance, for the creation of statistical shape models (SSMs) \cite{Cootes:1992uw} training shapes are initially aligned for removing pose differences in order to only model shape variability. 

The most common way of shape representation is by encoding each shape as a point-cloud. In order to be able to process a set of shapes it is necessary that correspondences between all shapes are established. Whilst there is a vast amount of research in the field of shape correspondences (for an overview see \cite{Heimann:2009kv,VanKaick:2011uq}), in this paper we focus on the alignment of shapes and we assume that correspondences have already been established. 

The alignment of two objects by removing location, scale and rotation is known as \emph{Absolute Orientation Problem (AOP)} \cite{Horn:1988bq} or \emph{Procrustes Analysis} \cite{Gower:2004uu}. For the AOP there are various closed-form solutions, among them methods based on singular value decomposition (SVD) \cite{Arun:1987uu,Schonemann:1966ch}; based on eigenvalue decomposition \cite{Horn:1988bq}; based on unit quaternions \cite{Horn:1987hf} or based on dual quaternions \cite{Walker:1991kt}. A comparison of these methods \cite{Eggert:1997gf} has revealed that the accuracy and the robustness of all methods are comparable. 

The alignment of more than two objects is known as \emph{Generalised Procrustes Analysis (GPA)}. Whilst a computationally expensive global solution for GPA in two and three dimensions has been presented in \cite{Pizarro:2011ta}, the most common way for solving the GPA is to align the objects with a reference object. However, fixing any of the objects as reference induces a bias. An unbiased alternative is to align all objects with the adaptive mean object as reference. An iterative algorithm then alternatingly updates the reference object and estimates the transformations aligning the objects. The iterative nature of these methods constitutes a problem if the relative transformation between any pair of objects is noisy. This is for example the case if data is missing, correspondences are wrong or if the transformations are observed by independent sensors (\eg non-communicating robots observe each other). Noisy relative transformations can be characterised by transitive inconsistency, \ie transforming $A$ to $B$ and $B$ to $C$ might lead to a different result than transforming $A$ directly to $C$.

This paper presents a novel method for synchronising the set of all pairwise transformations in such a way that they globally exhibit transitive consistency. Experiments demonstrate the effectiveness of this method in denoising noisy pairwise transformations. Furthermore, using this novel method the GPA is solved in an unbiased manner in closed-form, \ie non-iterative. Transformation synchronisation is applied to solve the GPA with missing data as well as with wrong correspondence assignments and results in superior performance compared to existing methods.

Our main contribution is a generalisation of the techniques presented by Singer \etal \cite{Chaudhury:2013un,Hadani:2011tw,Hadani:2011hb,Singer:2011ba}, who have introduced a method for minimising global self-consistency errors between pairwise orthogonal transformations based on eigenvalue decomposition and semidefinite programming. With permutation transformations being a subset of orthogonal transformations, in \cite{Pachauri:2013wx} the authors demonstrate that the method by Singer \etal is also able to effectively synchronise permutation transformations for globally consistent matchings. 

In our case, rather than considering the special case of orthogonal matrices, we present a synchronisation method for invertible linear transformations. Furthermore, it is demonstrated how this method can be applied for the synchronisation of similarity, euclidean and rigid transformations, which are of special interest for the groupwise alignment of shapes.

\add{Whilst the proposed synchronisation method is applicable in many other fields where noisy pairwise transformations are to be denoised} (\eg groupwise image registration or multi-view registration), in this paper GPA is used as illustrating example.

\section{Methods}
For the presentation of our novel transformation synchronisation method the notation and some foundations are introduced first. Subsequently, a formulation for the case of perfect information is given. Motivated by these elaborations, a straightforward extension to handle noisy pairwise transformations is presented.
Finally, various types of transformations are discussed.

\subsection{Notation and Foundations}
$\matX_i, \matX_j \in \R^{n \times d}$ are matrices representing point-clouds with $n$ points in $d$ dimensions where in the following all $\matX_i$ are simply referred to as point-clouds. Let $\matI$ be the identity matrix and $\zerovec$ be the vector containing only zeros, both having appropriate dimensions according to their context. The Frobenius norm is denoted by $\|\cdot\|_F$.
Let $\matT_{ij} \in \R^{d \times d}$ be an invertible transformation matrix aligning point-cloud $\matX_i$ with $\matX_j$ (for all $i,j=1,\ldots,k$), where $\matT_{ij} = \matT_{ji}^{-1}$.
 Furthermore, $\mathcal{T} = \{\matT_{ij}\}_{i=1,j=1}^{k}$ is the set of all $k^2$ pairwise transformations. 

\noindent A desirable property of the set of transformations $\mathcal{T}$ is that it complies with the following transitive consistency condition:
\begin{mydef}\label{transitivity}
The set of relative transformations $\mathcal{T}$ is said to be transitively consistent if 
$$ \matT_{ij} \matT_{jl} = \matT_{il} \quad \text{for all} \quad i,j,l= 1,\ldots,k \,. $$
\end{mydef}
Definition \ref{transitivity} states that the transformation from $i$ to $j$ followed by the transformation from $j$ to $l$ must be the same as directly transforming from $i$ to $l$. 

\begin{lemma}\label{lem1}
The set of relative transformations $\mathcal{T}$ is transitively 
consistent if and only if there is a set of invertible transformations
$\{\bar{\matT}_i\}_{i = 1}^k$ such that 
$$\matT_{ij} = \bar{\matT}_i\bar{\matT}_j^{-1} \quad \text{for all} \quad ~ i,j =  1,\ldots,k \,.$$
\end{lemma}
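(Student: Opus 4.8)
The plan is to prove the two directions of the equivalence separately. The reverse direction (existence of the factorisation implies transitive consistency) is a direct substitution and requires no construction, so I would dispatch it first. Assuming invertible $\{\bar{\matT}_i\}_{i=1}^k$ with $\matT_{ij} = \bar{\matT}_i\bar{\matT}_j^{-1}$, I would simply form the product appearing in Definition~\ref{transitivity}, namely $\matT_{ij}\matT_{jl} = \bar{\matT}_i\bar{\matT}_j^{-1}\bar{\matT}_j\bar{\matT}_l^{-1}$, and observe that the inner factors telescope since $\bar{\matT}_j^{-1}\bar{\matT}_j = \matI$, leaving $\bar{\matT}_i\bar{\matT}_l^{-1} = \matT_{il}$. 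This holds for all $i,j,l$, which is exactly transitive consistency.

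For the forward direction (transitive consistency implies the factorisation), the key step is to exhibit the family $\{\bar{\matT}_i\}$ explicitly. I would fix an arbitrary reference index, say $1$, and define $\bar{\matT}_i := \matT_{i1}$ for each $i$. Each such matrix is invertible because every $\matT_{ij}$ is invertible by hypothesis, so this is a legitimate choice. It then remains to check that $\bar{\matT}_i\bar{\matT}_j^{-1} = \matT_{ij}$. I would use two ingredients: first, the stated convention $\matT_{ij} = \matT_{ji}^{-1}$, which gives $\bar{\matT}_j^{-1} = \matT_{j1}^{-1} = \matT_{1j}$ and turns the product into $\matT_{i1}\matT_{1j}$; second, transitive consistency applied with the middle index equal to $1$, which collapses $\matT_{i1}\matT_{1j}$ directly to $\matT_{ij}$.

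There is no genuine obstacle here, only a bookkeeping point to get right: the reference construction must respect the indexing convention $\matT_{ij} = \matT_{ji}^{-1}$ so that the defined $\bar{\matT}_i$ line up with the desired factorisation rather than its inverse. I would also note in passing that transitive consistency forces $\matT_{ii} = \matI$ (take $i=j=l$ and use invertibility), so that $\bar{\matT}_1 = \matT_{11} = \matI$, confirming the construction is internally consistent. Since the reference index was arbitrary, the factorisation is only unique up to a common right multiplication by an invertible matrix, but this non-uniqueness does not affect the statement being proved.
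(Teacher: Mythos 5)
Your proposal is correct and follows essentially the same route as the paper: the same telescoping argument for the reverse direction, and the same construction $\bar{\matT}_i := \matT_{i1}$ for the forward direction, using the convention $\matT_{ij} = \matT_{ji}^{-1}$ and transitive consistency through the reference index $1$ (the paper applies transitivity to $\matT_{1i}\matT_{ij} = \matT_{1j}$ and inverts, whereas you apply it directly to $\matT_{i1}\matT_{1j} = \matT_{ij}$, a trivial rearrangement). Your observation that $\matT_{ii} = \matI$ follows from invertibility and $\matT_{ii}\matT_{ii} = \matT_{ii}$ also mirrors the paper's argument.
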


\begin{proof} For the sake of completeness a proof is provided.

\noindent``$\Leftarrow$'':
Transitive consistency of $\mathcal{T}$ follows directly from the definition $\matT_{ij} = \bar{\matT}_i\bar{\matT}_j^{-1}$, since for all $i,j,l= 1,\ldots,k$ it holds that
\begin{align}
\matT_{il} & = \bar{\matT}_{i}\bar{\matT}_l^{-1} = \bar{\matT}_{i} \matI \bar{\matT}_l^{-1} \\
 &= \bar{\matT}_{i} (\bar{\matT}_{j}^{-1} \bar{\matT}_j) \bar{\matT}_l^{-1} \\
 & =  (\bar{\matT}_{i} \bar{\matT}_{j}^{-1}) (\bar{\matT}_j \bar{\matT}_l^{-1})  \\
 & = \matT_{ij} \matT_{jl} \, .
\end{align}

\noindent``$\Rightarrow$'':
In the 
rest of the proof we direct  
our attention towards the necessity of the existence of the $\bar{\matT}_i$ 
transformations. 

First of all, if the transformations in $\mathcal{T}$ are 
transitively consistent 
\begin{equation}\label{eq:identity} 
\matT_{ii} = \matI \quad \text{for all} \quad ~ i =  1,\ldots,k \,.
\end{equation}
This follows
by the fact that the $\matT_{ii}$ needs to be invertible while 
satisfying, by Definition \ref{transitivity}, that $\matT_{ii}\matT_{ii} = \matT_{ii}$.

Let $\bar{\matT}'_i = \matT_{i1}$ for all $i$. Now we show that 
$\bar{\matT}'_i$ are such $\bar{\matT}_i$ matrices we seek. 
Since, by using \eqref{eq:identity}, $\bar{\matT}'_1 = \matI$, we can write $\matT_{i1} = \bar{\matT}'_{i} \matI  =  \bar{\matT}'_{i} (\bar{\matT}'_{1})^{-1}$.

Now for any $\matT_{ij}$, we can use that
\begin{align}
\matT_{1i}\matT_{ij} = \matT_{1j} \, .
\end{align}
Thus, 
\begin{align}
\matT_{ij} = \matT_{1i}^{-1}\matT_{1j} =  \matT_{i1}\matT_{j1}^{-1}  = \bar{\matT}'_i (\bar{\matT}'_j)^{-1}  \, . %
\end{align}
\end{proof}

\subsection{Perfect Information}
Due to Lemma~\ref{lem1}, there is a
reference coordinate frame, denoted by $\star$,  
from which there are $\matT_{i \star}$ transformations such that $\matT_{ij} = \matT_{i \star} \matT_{\star j}$ for all $i,j$. \add{Note that the reference coordinate frame is merely used as a tool for deriving our method and it is irrelevant what the actual reference frame is.}
Let us introduce
\begin{align}
\matW  =  \begin{bmatrix}  \matT_{ij} \end{bmatrix} \label{constructW} = &  \begin{bmatrix}
      \matT_{11} & \cdots & \matT_{1k} \\
      \vdots & \ddots & \vdots \\
      \matT_{k1} & \cdots & \matT_{kk}
    \end{bmatrix}  \\
 = &  \begin{bmatrix}
      \matT_{1\star}\matT_{\star1} & \cdots & \matT_{1\star }\matT_{\star k} \\
      \vdots & \ddots & \vdots \\
      \matT_{k \star }\matT_{\star 1} & \cdots & \matT_{k \star }\matT_{\star k}
    \end{bmatrix}  \\
 = &  \begin{bmatrix}
      \matT_{1\star}\matT_{1\star}^{-1} & \cdots & \matT_{1\star }\matT_{k\star}^{-1} \\
      \vdots & \ddots & \vdots \\
      \matT_{k \star }\matT_{1\star}^{-1} & \cdots & \matT_{k \star }\matT_{k\star}^{-1}
    \end{bmatrix} \\
 = & ~ \matU_1 \matU_2  \,,
\end{align}
where
$$
\matU_1  =   
\begin{bmatrix} 
\matT_{1\star} \\ 
\matT_{2\star} \\ 
\vdots \\ 
\matT_{k\star}
\end{bmatrix} \quad \text{and} \quad \matU_2 =  \begin{bmatrix}  \matT_{1\star}^{-1}, \matT_{2\star}^{-1}, \ldots, \matT_{k\star}^{-1} \end{bmatrix}\,.
$$
Using this notation, finding either $\matU_1$ or $\matU_2$ (up to an invertible linear transformation) gives the transitively consistent transformations. 

\begin{mydef}
   \add{Let $\matA \in \R^{p \times q}$. The set}
   $$\im(\matA) = \{\matA \vecx  ~\vert~  \vecx \in \R^{q}\}$$
   \add{is the column space of $\matA$ and the set} 
   $$\ker(\matA) = \{\vecx \in \R^{q} ~\vert~ \matA \vecx = \zerovec\}$$ 
   \add{is the null space of $\matA$.}
 \end{mydef} 

\add{Note that due to the invertability of $\matT_{i\star}$ (for all ${i=1,\ldots,k}$) it holds that the matrix $\matU_1$ has rank $d$ and thus the dimensionality of the column space $\im(\matU_1)$ of $\matU_1$ is exactly $d$.}

\begin{prop}\label{prop1}
\add{Let $\matZ = \matW - k \matI$.} The linear subspace $\ker(\matZ)$ has dimension $d$ and is equal to $\im(\matU_1)$. %
\end{prop}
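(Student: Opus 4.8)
The plan is to exploit the factorisation $\matW = \matU_1\matU_2$ together with the key observation that the two factors, composed in the \emph{reverse} order, collapse to a scalar multiple of the identity. Concretely, I would first verify the identity
$$\matU_2\matU_1 = \sum_{i=1}^{k} \matT_{i\star}^{-1}\matT_{i\star} = \sum_{i=1}^{k} \matI = k\matI \,,$$
where on the right $\matI$ denotes the $d\times d$ identity. This single fact is the engine behind both inclusions needed to identify $\ker(\matZ)$ with $\im(\matU_1)$, and it is the one step that requires genuine care, since it is where the structure of $\matU_1$ and $\matU_2$ (stacks of mutually inverse blocks) actually enters; everything afterwards is immediate algebra.

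Using this identity I would establish the two inclusions. For $\im(\matU_1)\subseteq\ker(\matZ)$, I compute
$$\matZ\matU_1 = (\matW - k\matI)\matU_1 = \matU_1(\matU_2\matU_1) - k\matU_1 = k\matU_1 - k\matU_1 = \zerovec \,,$$
so every column of $\matU_1$ lies in $\ker(\matZ)$. For the reverse inclusion $\ker(\matZ)\subseteq\im(\matU_1)$, I take any $\vecx\in\ker(\matZ)$, which by definition of $\matZ$ means $\matW\vecx = k\vecx$, i.e. $\matU_1\matU_2\vecx = k\vecx$. Since $k>0$ I may divide by $k$ and write $\vecx = \matU_1\!\left(\tfrac{1}{k}\matU_2\vecx\right)$, exhibiting $\vecx$ explicitly as an element of $\im(\matU_1)$.

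Combining the two inclusions yields $\ker(\matZ) = \im(\matU_1)$. The dimension claim then follows directly from the note preceding the proposition: the invertibility of each $\matT_{i\star}$ forces $\matU_1$ to have rank $d$, so $\dim\im(\matU_1) = d$ and hence $\dim\ker(\matZ) = d$. I expect no real obstacle beyond the reverse-product identity $\matU_2\matU_1 = k\matI$; the main conceptual point is simply to disentangle the two orderings $\matU_1\matU_2$ and $\matU_2\matU_1$ and notice that the reverse product telescopes, after which both set inclusions drop out in one line each.
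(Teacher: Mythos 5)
Your proof is correct, and for the harder half of the statement it takes a genuinely different and more direct route than the paper. Both arguments share the same engine, the reverse-product identity $\matU_2\matU_1 = k\matI$, and the inclusion $\im(\matU_1)\subseteq\ker(\matZ)$ is proved identically via $\matZ\matU_1 = \matU_1\matU_2\matU_1 - k\matU_1 = \zerovec$. Where you diverge is the reverse inclusion: the paper argues by contradiction, taking a hypothetical kernel element outside $\im(\matU_1)$, splitting it by the orthogonal decomposition theorem into $\vecx_{\ker} + \vecx_{\im}$ with $\vecx_{\ker}\in\ker(\matU_1^T)$, and then forcing $\vecx_{\ker}^T\vecx_{\ker}=0$ through the quadratic form $\vecx_{\ker}^T\matZ\vecx_{\ker}$. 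You instead observe that $\matZ\vecx=\zerovec$ means $\matU_1\matU_2\vecx = k\vecx$, and since $k>0$ you can write $\vecx = \matU_1\bigl(\tfrac{1}{k}\matU_2\vecx\bigr)$, exhibiting the preimage explicitly. Your version is shorter, constructive, and avoids both the orthogonal decomposition and the contradiction; it also makes transparent exactly where positivity of $k$ is used. The paper's quadratic-form argument buys nothing extra here, though its structure (projecting onto $\ker(\matU_1^T)$ and killing the residual) is the kind of reasoning that survives better when one passes to the noisy setting and least-squares approximations of the null space. Your closing step, deducing $\dim\ker(\matZ)=d$ from $\rank(\matU_1)=d$, matches the remark the paper places just before the proposition.
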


\begin{proof}
First it is shown that the columns of $\matU_1$ are contained in the null space of $\matZ$, \ie $\im(\matU_1) \subseteq \ker(\matZ)$, and then it is shown that the null space of $\matZ$ has exactly dimension $d$.

Note that $\matU_2 \matU_1 = k\matI$, which we will make use of shortly. Multiplication of $\matU_1$ on the right to $\matW = \matU_1 \matU_2$ gives
\begin{eqnarray}
  & & \matW \matU_1 = \matU_1 \matU_2 \matU_1 = \matU_1 k\matI\\
  & \equivalent & \matW \matU_1 = k \matU_1 \\
  & \equivalent & \matW \matU_1 - k \matU_1  = \zerovec \\
  & \equivalent & (\matW - k \matI) \matU_1  = \zerovec \\
  & \equivalent & \matZ \matU_1  = \zerovec \quad \mbox{with } \matZ = \matW - k \matI \, . \label{nullspaceeqn}
\end{eqnarray} 
From (\ref{nullspaceeqn}) it can be seen that all the columns of $\matU_1$ are contained in the null space of $\matZ$, so $\im(\matU_1) \subseteq \ker(\matZ)$.

However, it still remains to be shown that the dimensionality of $\ker(\matZ)$ is exactly $d$, \ie $\im(\matU_1)$ spans the entire null space of $\matZ$ and not just a part of it. This is done by showing that there are no non-zero vectors $\vecx$ 
that are not contained in $\im(\matU_1)$ but are contained in $\ker(\matZ)$. 

Formally this is expressed by the requirement that the set
$$
A = \{ \vecx \in \R^{kd} ~\vert~ \vecx \neq \zerovec, \vecx \notin \im(\matU_1), \vecx \in \ker(\matZ)  \} 
$$
is empty.
Suppose now that $A$ is not empty, so it contains the element $\vecx \in \R^{kd}$. \add{Using the orthogonal decomposition theorem}, the vector $\vecx$ can be rewritten as $\vecx = \vecx_{\ker} + \vecx_{\im}$, where $\vecx_{\ker} \in \ker(\matU_1^T)$ and $\vecx_{\im} \in \im(\matU_1)$. The definition of $A$ states that $\vecx \notin \im(\matU_1)$, which implies that $\vecx_{\ker} \neq \zerovec$. Further, the definition of $A$ states that
\begin{align}
 & \vecx \in \ker(\matZ) \\
 \equivalent & \matZ \vecx = \zerovec \\
 \equivalent & \matZ (\vecx_{\ker} + \vecx_{\im}) = \zerovec \\
 \equivalent & \matZ \vecx_{\ker} + \matZ \vecx_{\im} = \zerovec \, .
 \intertext{Per definition $\vecx_{\im} \in \im(\matU_1)\subseteq \ker(\matZ)$, so it follows that $\matZ \vecx_{\im} = \zerovec$, leading to}
& \matZ \vecx_{\ker}  = \zerovec \, .\\
  \intertext{Multiplication of $\vecx_{\ker}^T$ on the left gives}
& \vecx_{\ker}^T \matZ \vecx_{\ker}  = 0\\
\equivalent & \vecx_{\ker}^T (\matU_1 \matU_2 - k\matI) \vecx_{\ker}  = 0 \\
\equivalent & \vecx_{\ker}^T \matU_1 \matU_2 \vecx_{\ker} - k \vecx_{\ker}^T  \vecx_{\ker}  = 0\\
\equivalent & \vecx_{\ker}^T \matU_2^T \matU_1^T \vecx_{\ker} - k \vecx_{\ker}^T  \vecx_{\ker}  = 0 \, .
\intertext{Per definition $\vecx_{\ker} \in \ker(\matU_1^T)$, so $\matU_1^T \vecx_{\ker} = \zerovec$, leading to }
& \vecx_{\ker}^T  \vecx_{\ker}  = 0 \\
\equivalent & \vecx_{\ker}  = \zerovec \, . \label{contr1}
\end{align}
Equation \eqref{contr1} is a contradiction to $\vecx_{\ker} \neq \zerovec$, thus, the set $A$ is empty.
\end{proof}

Proposition \ref{prop1} states that $\matU_1$ in \eqref{nullspaceeqn} can, up to an invertible linear transformation, be retrieved by finding the $d$-dimensional null space of $\matZ$.
Let $\matZ = \matU \matSigma \matV^T$ be the singular value decomposition (SVD) of $\matZ$. The $d$ columns of $\matV$ corresponding to the zero singular values span $\ker(\matZ)$ and give a solution to (\ref{nullspaceeqn}).

As we are only able to retrieve the transformations $\matT_{i \star}$ in the blocks of $\matU_1$ up to invertible linear transformations, w.l.o.g. we create a new version of $\matU_1$, call it $\matU_1'$, with the first $d \times d$ block being equal to the identity, as
\begin{align} \label{normalisedU1}
   \matU_1' = \matU_1 \matT_{1\star}^{-1} = 
   \begin{bmatrix} \matT_{1\star} \matT_{1\star}^{-1} \\ \matT_{2\star} \matT_{1\star}^{-1} \\ \vdots \\ \matT_{k\star} \matT_{1\star}^{-1} \end{bmatrix} \, . %
\end{align}

\subsection{Noisy Pairwise Transformations}
Up until this point, 
the matrix $\matU_1$ is obtained under perfect information, \ie the transitivity condition in Definition~\ref{transitivity} holds for all $\matT_{ij}$ transformations contained in the blocks of $\matW$. However, we are interested in the case when the transitivity condition does not hold due to measurement noise. Assume now that we have a noisy observation of $\matW$, denoted as $\tilde\matW$. Also, let the noisy version of $\matZ$ be $\tilde\matZ = \tilde \matW -k \matI$. Now, in general it is not the case that the null space of $\tilde \matZ$ is $d$-dimensional. Instead, the least-squares approximation of the $d$-dimensional null space is considered, which leads to the following optimisation problem:

\begin{problem}{Least-squares Transformation Synchronisation}\label{globallyConsTransLsq3}
\begin{eqnarray}
\nonumber
   & \underset{\hat\matT_{1\star},\ldots,\hat\matT_{k\star}}{\mbox{minimise}} \quad & \| \tilde\matZ \hat\matU_1 \|^2_F \\ 
   \nonumber
    & \mbox{subject to} \quad &  \vecu_i^T \vecu_j = 0 \quad \text{for all }~ i \neq j \\ %
    \nonumber
    & \quad & \| \vecu_i \| = 1 \quad \text{ for all } ~ i \\ %
    \nonumber
    & \quad & \hat\matU_1 = \begin{bmatrix} \vecu_1, \ldots, \vecu_d \end{bmatrix} \in \R^{kd \times d} \, .
\end{eqnarray}
\end{problem}
The rank-$d$ approximation of the null space of $\tilde \matZ$ can be retrieved using the SVD of $\tilde\matZ = \matU \matSigma \matV^T$. In this case the columns of $\matV$ corresponding to the $d$ smallest singular values span the rank-$d$ approximation of the null space of $\tilde\matZ$, giving $\hat\matU_1$, the estimate for $\matU_1$. By using \eqref{normalisedU1}, $\hat \matU_1'$ can be retrieved from $\hat \matU_1$.

\subsection{Affine Transformations in Homogeneous Coordinates}
In this section it is shown that the method is also applicable for invertible affine transformations, rather than invertible linear transformations. This is done by representing the $d$-dimensional affine transformations $\matT_{ij}$ by using $(d{+}1){\times}(d{+}1)$ homogeneous matrices.

\noindent Each affine transformation $\matT_{ij}$ can be written as
\begin{align} \label{affineBreakDown}
   \matT_{ij} = 
   \begin{bmatrix}
      \matA_{ij} & \zerovec \\
      \vect_{ij} & 1
   \end{bmatrix} \, , %
\end{align}
where $\matA_{ij}$ is the (invertible) linear $d \times d$ transformation matrix and $\vect_{ij}$ is the $d$-dimensional row vector representing the translation. The inverse of $\matT_{ij}$ is given by
\begin{align}
   \matT_{ij}^{-1} = 
   \begin{bmatrix}
      \matA_{ij}^{-1} & \zerovec \\
      - \vect_{ij} \matA_{ij}^{-1} & 1
   \end{bmatrix} \, . %
\end{align}
Similar to the linear case described in \eqref{constructW}, the matrix $\matW$ is constructed from all $\matT_{ij}$.
It is assumed that the matrix $\tilde\matW$, corresponding to the noisy observation of $\matW$, contains blocks that are proper affine transformations, \ie the last column of each block is $\begin{bmatrix} \zerovec & 1 \end{bmatrix}^T$.

A simple way to ensure that the synchronised transformations are affine transformations in homogeneous coordinates is to add the row vector $\vecz = \begin{bmatrix} z & z & \ldots & z \end{bmatrix} \in \R^{k(d{+1})}$, with $z = \begin{bmatrix} 0 & 0 & \ldots & 0 & 1\end{bmatrix} \in \R^{d{+}1}$, to the matrix $\tilde\matZ$. By adding the vector $\vecz$ to $\tilde\matZ$, the vector $\vecz^T$ is removed from the null space of $\tilde\matZ$. Using this approach, a solution is then found by solving Problem (\ref{globallyConsTransLsq3}) with the updated $\tilde\matZ$. Then the resulting $\hat\matU'_1$ gives an estimate of the first $d$ columns of $\hat\matT_{i\star} ~ (i=1,\ldots,k)$ and these are the columns we seek.

\subsection{Similarity Transformations}
Similarity transformations are transformations that allow for translations, isotropic scaling, rotations \add{and reflections}. To retrieve similarity transformations, the estimates of the synchronised affine transformations $\hat\matT_{i\star} ~ (i=1,\ldots,k)$ are determined first. The translation component $\hat\vect_{i\star}$ of $\hat\matT_{i\star}$ can directly be extracted from $\hat\matT_{i\star}$ since it has the structure presented in (\ref{affineBreakDown}). To obtain the scaling factor and the \add{orthogonal transformation}, the linear component $\hat\matA_{i\star}$ is factorised using SVD, resulting in $\hat\matA_{i\star} = \matU_{i\star} \matSigma_{i\star} \matV_{i\star}^T$. The orthogonal component $\hat\matQ_{i\star}$ is then given by 
\begin{align} \label{rotComp}
   \hat\matQ_{i\star} = \matU_{i\star} \matV_{i\star}^T \, ,
\end{align} 
 and the isotropic scaling factor $\hat s_{i\star}$ is given by 
 \begin{align} \label{geomMeanScaling}
   \hat s_{i\star} = \left( \prod_{j=1}^{d}{ \vert {(\sigma_{i\star})}_{jj} \vert }\right)^{\frac{1}{d}} \, ,
\end{align}
where ${(\sigma_{i\star})}_{jj}$ is the $j$-th element on the diagonal of $\matSigma_{i\star}$.

\begin{remark}
  \add{It can be shown that retrieving the orthogonal component as presented in eq.} \eqref{rotComp} \add{is the least-squares solution to the projection onto the set of orthogonal matrices. However, in eq.} \eqref{geomMeanScaling} \add{the isotropic scaling factor is retrieved as the geometric mean of the individual axis-aligned scaling factors. The least-squares solution to the projection onto the set of similarity transformations is given by the arithmetic mean,} \ie
$\hat s_{i\star}^{\text{lsq}} = \frac{1}{d} \sum_{j=1}^{d}{ \vert {(\sigma_{i\star})}_{jj} \vert }$.

\end{remark}

\subsection{Euclidean Transformations}
Similarity transformations without isotropic scaling are called euclidean transformations. To obtain euclidean transformations, the similarity transformations are extracted and the scaling factors $\hat s_{i\star}$ (for all $i = 1,\ldots,k$) are set to $1$.

\subsection{Rigid Transformations}
Euclidean transformations without reflections are called rigid transformations. Rigid transformations can be obtained by ensuring that the determinant of the rotational component $\hat\matQ_{i\star}$ described in (\ref{rotComp}) equals $1$. This can be achieved by setting 
\begin{align}
  & \hat\matQ_{i\star}  =  \matU_{i\star} \matD_{i\star} \matV_{i\star}^T , \mbox{ with}\\
  & \matD_{i\star} = \diag(1,\ldots,1, \det(\matV_{i\star}^T \matU_{i\star})) 
  \, .
\end{align}

\section{Experiments}
By generating ground truth data and adding Gaussian noise to it, we first compare the error of the synchronised transformations using our method to the error of the unsynchronised transformations.
Furthermore, the transformation synchronisation method is applied for solving the Generalised Procrustes Problem with missing points and with wrong correspondence assignments.

\subsection{Noisy Transformations}
In this section it is described how the ground truth transformations are generated, how noisy versions thereof are generated and eventually results of the transformation synchronisation method are presented.

\subsubsection{Ground Truth Transformations}\label{npp}
For the analysis of the performance of our method we generate a set of random transformations $\mathcal{T_{\star}} = \{\matT_{i\star}\}_{i=1}^{k}$, that are used in turn to generate the transitively consistent set of pairwise transformations $\mathcal{T} = \{\matT_{ij} = \matT_{i\star} \matT_{j\star}^{-1}\}_{i=1,j=1}^{k}$, serving as ground truth for the evaluation. The generation of $\mathcal{T_{\star}}$ is described in the following.

The dot-notation is used to illustrate that $\dot x$ is a random variable with a particular probability distribution. For generating the set $\mathcal{T_{\star}}$, we assume that the point-clouds that lead to the transformations have some structural similarity, \ie the transformations are not entirely random. In particular, the scaling factors, the translation components and the linear part of the transformation are restricted in the sense that they cannot be arbitrary. However, arbitrary orientations in $d$-dimensional space are allowed for.

The set $\mathcal{T_{\star}}$ contains the elements $\matT_{i\star} ~(i = 1, \ldots, k)$, which are samples of
\begin{align} \label{rndTrans}
    \dot\matT = 
    \begin{bmatrix}
      \dot s \dot\matQ \dot\matN & \zerovec \\
      \dot\vect & 1
   \end{bmatrix} \, , %
\end{align}
where ${\dot s \sim \unif(0.5,1.5)}$ is a scaling factor and ${\dot \vect \sim \unif(-2.5,2.5)^{d}}$ is a translation, with $\unif(a,b)^d$ denoting the $d$-dimensional uniform distribution having the open interval $(a,b)^d$ as support. 
Samples of the $d{\times}d$ random rotation matrix $\dot \matQ$ are drawn by extracting the rotational component of a non-singular random matrix as described in (\ref{rotComp}).
The $d{\times}d$ random noise matrix $\dot\matN$ is given by $\dot\matN= \matI + \mathbf{\dot\epsilon}$, where $\mathbf{\dot\epsilon} \sim \mathcal{N}(0,0.1^2)^{d\times d}$ is a $d{\times}d$ random matrix with each element having univariate normal distribution $\mathcal{N}(0,0.1^2)$. The purpose of creating the noise in the way using the random matrix $\dot\matN$ is to restrict the linear component in the transformation and thus to avoid ill-conditionedness with very high probability.

Depending on the type of transformation that is evaluated, the parameters of $\mathcal{T_{\star}}$ have different properties, which are summarised in Table \ref{tab:rndProps}.

\begin{table}[htbp]
\footnotesize
  \centering
  
    \begin{tabular}{lllll}
    \toprule
          & $\dot\matQ$ & $\dot\vect$ & $\dot s$ & $\dot\matN$ \\
    \midrule
    linear & $\vert \det \vert = 1$ & $= \zerovec$ & $\sim \unif(0.5,1.5)$ & $\sim \matI + \mathbf{\epsilon}$ \\
    affine & $\vert \det \vert = 1$ & $\sim \unif(0.5,1.5)^d$ & $\sim \unif(0.5,1.5)$ & $\sim \matI + \mathbf{\epsilon}$ \\
    similarity & $\vert \det \vert = 1$ & $\sim \unif(0.5,\texttt{}1.5)^d$ & $\sim \unif(0.5,1.5)$ & $= \matI$ \\
    euclidean & $\vert \det \vert = 1$ & $\sim \unif(0.5,1.5)^d$ & =1     & $=\matI$ \\
    rigid & $\det = 1$ & $\sim \unif(0.5,1.5)^d$ & =1     & $=\matI$ \\
    \bottomrule
    \end{tabular}%
    \caption{Properties of components of random transformations for different types of transformations generated according to (\ref{rndTrans}).} 
  \label{tab:rndProps}%
\end{table}%

Once the ground truth set $\mathcal{T}$ of transitively consistent transformations has been established, a noisy version thereof is synthetically created, as described in the next section.

The error $e(\mathcal{T}^1,\mathcal{T}^2)$ between two sets of pairwise transformations $\mathcal{T}^1 = \{ \matT_{ij}^1\}_{i,j=1}^k$ and $\mathcal{T}^2 = \{ \matT_{ij}^2\}_{i,j=1}^k$ is defined as
\begin{align} \label{avgError}
    e(\mathcal{T}^1,\mathcal{T}^2) = \frac{1}{k^2} \sum_{i,j=1}^{k} { \|\matT_{ij}^1 - \matT_{ij}^2 \|_F } \, .
\end{align}

\subsubsection{Additive Gaussian Noise}\label{addGN}
\newcommand{\nDraws}{100}
\begin{figure*}[!t!h] 
     \centerline{
        \subfigure{\includegraphics{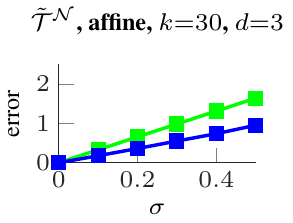}} \hfil
        \subfigure{\includegraphics{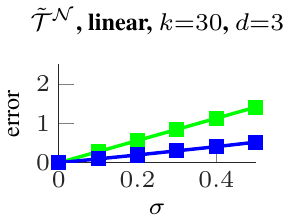}} \hfil
        \subfigure{\includegraphics{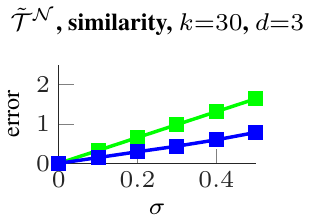} }\hfil
        \subfigure{\includegraphics{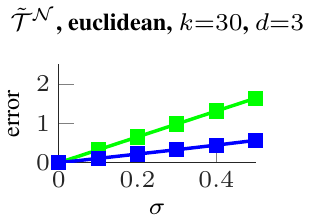} }\hfil
        \subfigure{\includegraphics{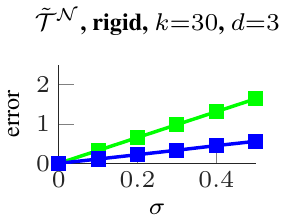} }
     }
     \centerline{
        \subfigure{\includegraphics{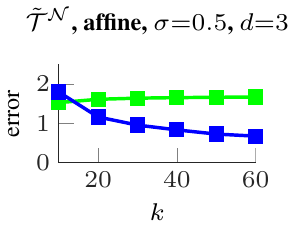} }\hfil
        \subfigure{\includegraphics{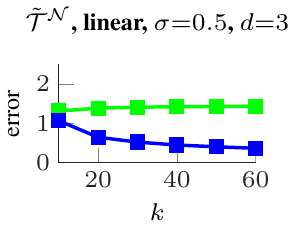}} \hfil
        \subfigure{\includegraphics{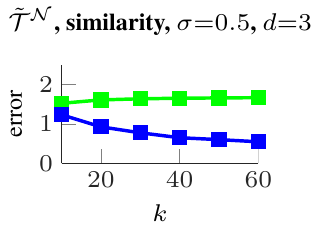} }\hfil
        \subfigure{\includegraphics{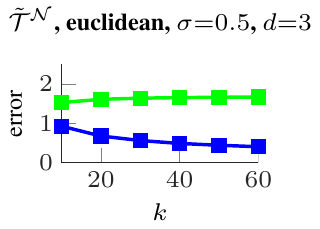}} \hfil
        \subfigure{\includegraphics{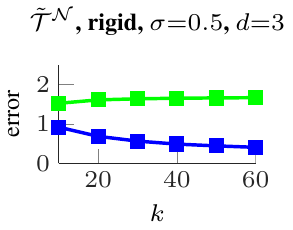}} 
     }
     \centerline{
        \subfigure{\includegraphics{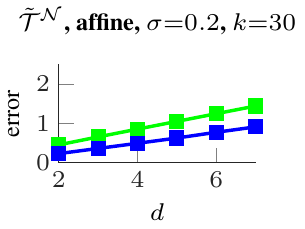}} \hfil
        \subfigure{\includegraphics{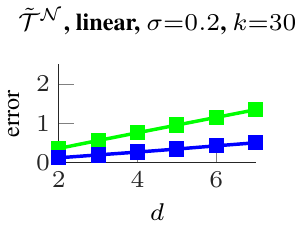}} \hfil
        \subfigure{\includegraphics{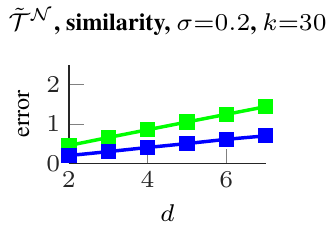}} \hfil
        \subfigure{\includegraphics{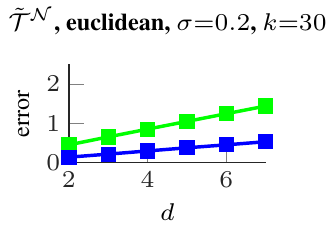}} \hfil
        \subfigure{\includegraphics{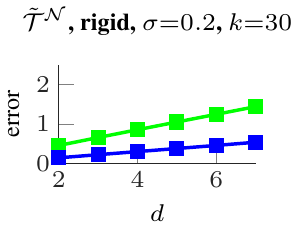} }
     }
    \caption{Error for additive normal noise $\mathcal{\tilde T}^{\mathcal{N}}$ for different configurations as specified in the graph title with one varying parameter (horizontal axis). Each row of graphs shows a particular varying parameter ($\sigma$, $k$ and $d$ from top to bottom) and each column of graphs shows a particular transformation type (affine, linear, similarity, euclidean, rigid, from left to right). The error as defined in \eqref{avgError} of the unsynchronised noisy transformations is shown in green and of the synchronised transformations in blue. Shown is the average error of $\nDraws$ randomly generated sets of ground truth transformations, where for each ground truth transformation $20$ runs of adding noise have been performed, resulting in a total of $2000$ simulations per graph.}
    \label{noisyTransResultsAddNorm}
\end{figure*}  

\add{The set of noisy pairwise transformations $\mathcal{\tilde T}^{\mathcal{N}}$ is created by adding to each element of the matrix $\matT_{ij}$ a sample from $\mathcal{N}(0,\sigma^2)$, which is conducted for all matrices $\matT_{ij} \in \mathcal{T}$ with $i \neq j$.} In the case of homogeneous transformation matrices $\matT_{ij}$, no noise is added to the last column, which shall always be $(\zerovec ~ 1)^T$.

Results of the simulations are shown in Fig.~\ref{noisyTransResultsAddNorm}. The first row of graphs show that for all types of transformations the error of synchronised transformations is smaller than the error of the unsynchronised transformations and that the slope of the error in the synchronised case is smaller than in the unsynchronised case. In the second row it can be seen that, even with a high amount of noise ($\sigma=0.5$), the error of the synchronised transformations decreases with an increasing number of objects $k$. As anticipated, with increasing $k$ there is more information available, directly resulting in a lower error. The last row of graphs shows that increasing the dimensionality results in an increasing error; however, the error of the synchronised transformations increases slower than for the unsynchronised ones.

\subsection{Generalised Procrustes Analysis}
\newcommand{\nDrawsA}{500}
\begin{figure*}[!ht]
      \centerline{
        \subfigure{\includegraphics{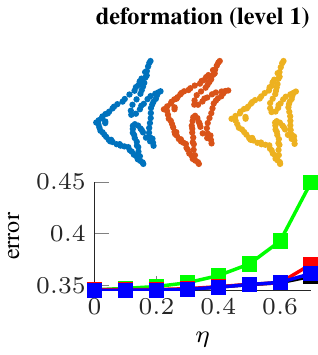}}  \hfil
        \subfigure{\includegraphics{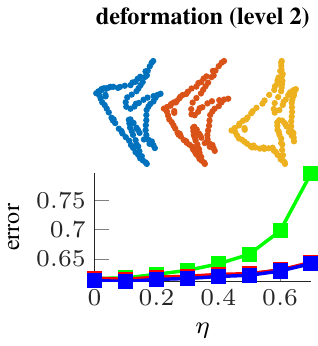}}  \hfil
        \subfigure{\includegraphics{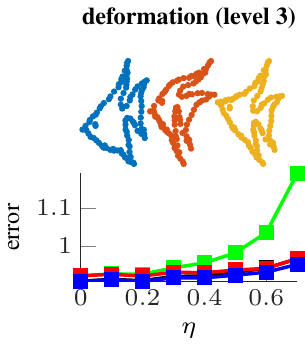}}  \hfil
        \subfigure{\includegraphics{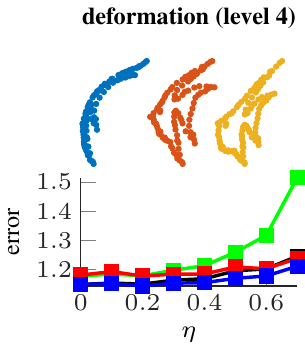}}  \hfil
        \subfigure{\includegraphics{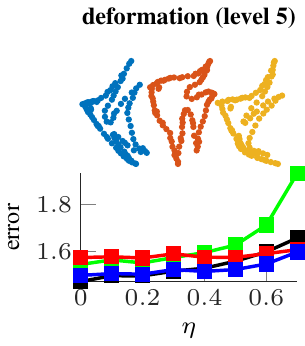}} \hfil
        }
      \centerline{
        \subfigure{\includegraphics{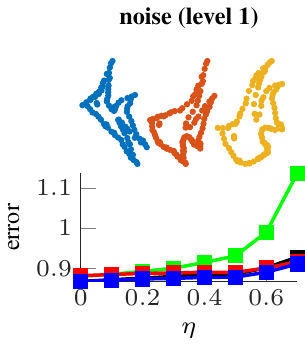}}  \hfil
        \subfigure{\includegraphics{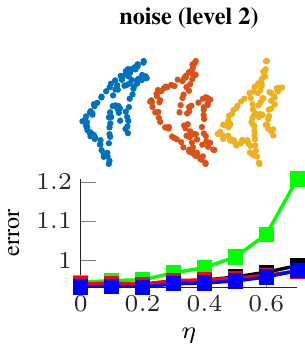}} \hfil
        \subfigure{\includegraphics{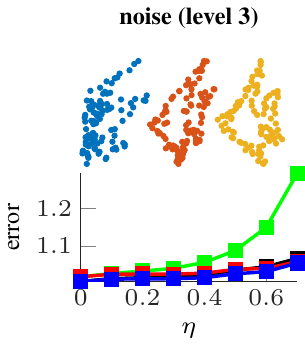}}  \hfil
        \subfigure{\includegraphics{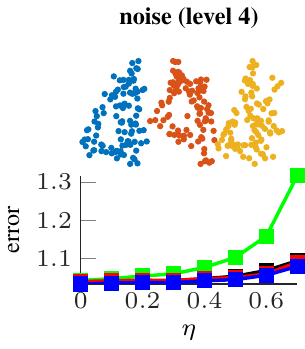}}  \hfil
        \subfigure{\includegraphics{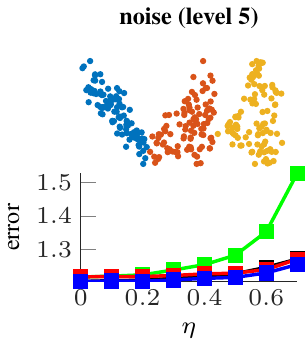}} \hfil 
      }
        \caption{Average shape error for the reference-based (green), iterative mean shape-based (black), synchronisation-based (blue) and stratified (red) method for solving the GPP with missing data. The horizontal axis shows the probability $\eta$ that a point is considered missing. At the top of each graph three shapes according to the particular level of deformation or noise are depicted. 
        Shown is the average shape error for $\nDrawsA$ draws of missing data in each graph. 
        In every run $k=30$ out of $K=100$ shapes are randomly selected, where each shape comprises $n=98$ points in $d=2$ dimensions.}
    \label{gppSynced}
\end{figure*}

In addition to evaluating the synchronisation of noisy pairwise transformations we have applied our method for solving 
the Generalised Procrustes Problem (GPP), which is done on the one hand with missing data and on the other hand with wrong correspondence assignments. For both simulations the 2D fish shapes from the Chui-Rangarajan data set \cite{Chui:2003wr} with different levels of deformation and noise have been used (refer \cite{Chui:2003wr} for more details). For each level of deformation and noise the data set contains $K=100$ shapes, each comprising $n=98$ points in $d=2$ dimensions. 

Finding the similarity transformation that best aligns two shapes, which is a subroutine for the evaluated reference-based, \add{the iterative mean shape-based} and the synchronisation-based method, is performed by an AOP implementation with symmetric scaling factors \cite{Horn:1988bq}. In the reference-based solution of GPP one shape is randomly selected as reference and all other shapes are aligned with the reference. \add{For the iterative mean shape-based method the initial reference shape is selected randomly and then the mean shape is iteratively updated.} In the synchronisation-based solution of GPP all $k^2$ pairwise AOPs are solved first, followed by the synchronisation of the resulting transformations in order to aggregate all information contained in the pairwise transformations. \add{Additionally, the stratified GPA method proposed in} \cite{Bartoli:2013vv} \add{is evaluated for solving the GPP. In our experiments we have observed that by using the stratified GPA method the linear part of the resulting transformations may collapse to the zero matrix; in order to enable a comparison with the other methods in these cases the linear part of the transformation has simply been set to the identity matrix.}

In the missing data experiments as well as the wrong correspondence experiments for each single run ${k=30}$ out of ${K=100}$ shapes are randomly selected.
For the experiments in the missing points case the missing points are simulated by discarding points according to a given probability.
For the experiments with wrong correspondences the correct correspondences are randomly disturbed in order to simulate wrong correspondences.

In contrast to \emph{solving} the AOPs, in both experiments the computation of the error is performed using the original shape (\ie with all points and with perfect correspondences). With that we investigate up to which amount recovering the original shapes from corrupt shape data is possible. The average shape error of a set of shapes $\mathcal{X} = \{\matX_i\}_{i=1}^k$ is defined as $e(\mathcal{X}) = \frac{1}{k^2} \sum_{i,j=1}^k \| \matX_i - \matX_j \|_F$.

\subsubsection{Missing Points}\label{gppMissingPoints}

In every run, additionally to randomly selecting $30$ out of $100$ shapes, each data point of a shape is considered to be missing with probability $\eta$. As \add{the implemented methods solve} the AOP only for common points in each pair of shapes, values for $\eta$ larger than $0.7$ have not been investigated because with $\eta>0.7$ the cases that the number of common points in a pair of shapes is less than $d=2$ occur too frequently (for $d$-dimensional data, there must be at least $d$ points in each shape in order to result in a system that is not under-determined). Also, for $\eta \le0.7$ it is possible that the number of common points in a pair of shapes is less than $d=2$; in these cases the draw of missing data is simply repeated.

In Fig.~\ref{gppSynced} the resulting error of the reference-based, the iterative mean shape-based, the synchronisation-based and the stratified solution of the GPP with missing data are shown for different levels of deformation and noise. It can be seen that even with an increasing amount of missing data, when using the synchronisation-based method the error increases only slightly, whilst the error of the reference-based method increases significantly with a larger amount of missing points. \add{With respect to the error, the transformation synchronisation method performs only marginally better than the iterative mean shape-based method and the stratified method. However, the average runtimes for solving a single GPP instance was $0.007$\,s for the reference-based method, $0.162$\,s for the synchronisation-based method, $1.932$\,s for the iterative mean shape-based method and $2.265$\,s for the stratified method, illustrating that our method performs significantly better than all other methods when taking runtime and error into account at the same time.}

\subsubsection{Wrong Correspondence Assignments}\label{wrongCorAss}

\newcommand{\nDrawsB}{500}

\begin{figure*}[!th] 
  \begin{minipage}{0.735\textwidth}
      \centerline{
        \subfigure{\includegraphics{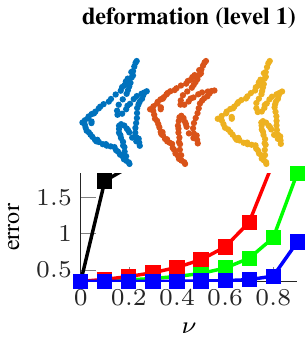}}  %
        \subfigure{\includegraphics{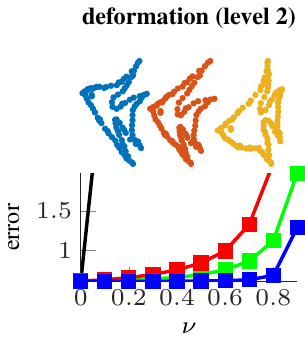}}  %
        \subfigure{\includegraphics{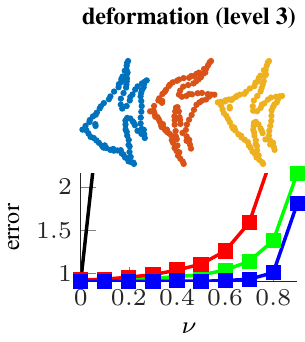}}  %
        \subfigure{\includegraphics{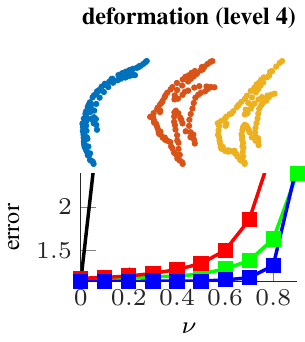}}  %
        }
         \vspace{6mm}
      \centerline{
        \subfigure{\includegraphics{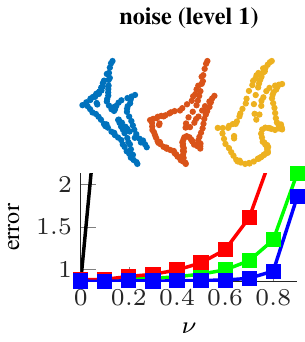}} %
        \subfigure{\includegraphics{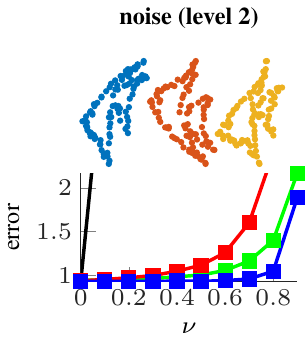}}  %
        \subfigure{\includegraphics{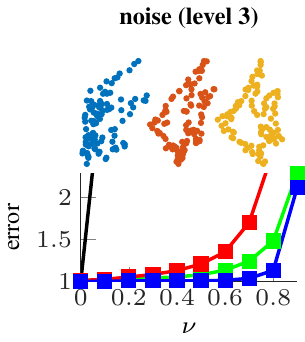}} %
        \subfigure{\includegraphics{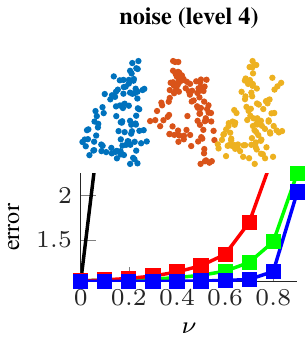}}  %
      }
    \end{minipage}
    \begin{minipage}{0.3\textwidth}
    \centerline{
      \subfigure{\includegraphics{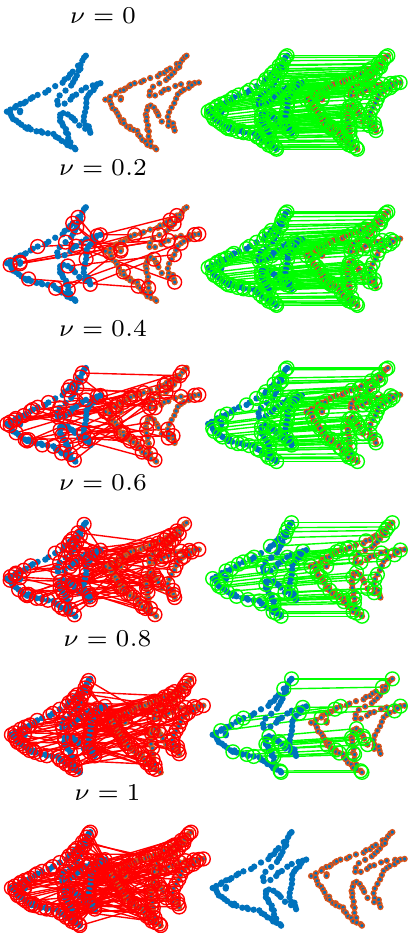} }
    }
    \end{minipage}
        \caption{Average shape error for the reference-based (green), iterative mean shape-based (black), synchronisation-based (blue) and stratified (red) method for solving the GPP with wrong correspondences. The horizontal axis shows the proportion $\nu$ of wrong correspondences. At the top of each graph three shapes according to the particular level of deformation or noise are depicted. 
        Shown is the average shape error for $\nDrawsB$ runs of disturbing correspondence assignments in each graph. In every run $k=30$ out of $K=100$ shapes are randomly selected, where each shape comprises $n=98$ points in $d=2$ dimensions. In the right-most column examples of the correspondence assignments between a pair of shapes are depicted for different values of $\nu$ in each row. In order to keep the visualisation as coherent as possible, the wrong correspondences (red lines) and the correct correspondences (green lines) are shown separately.}
    \label{gppWrongAssignments}
\end{figure*}

Additionally to the case of missing points, we have applied our method to solve the GPP with wrong correspondence assignments between shapes. In order to mimic practical applications, where it is frequently the case that the true correspondences are unknown and thus it must be assumed that wrong correspondences are present, we do not make any efforts to correct these wrong correspondences (such as using RANSAC \cite{Fischler:1981cv} or permutation synchronisation \cite{Pachauri:2013wx}). Instead, for each pair of shapes the AOP is solved whilst being aware that some of the points in the one shape have wrong counterparts in the other shape. Of course this will have influence on the resulting transformations. Thus, the objective of the simulations described in this section is to assess to what extent the transformations from shapes with wrong correspondences can be reconstructed using transformation synchronisation.

In every run, additionally to randomly selecting $30$ out of $100$ shapes, the correspondences between the $n$ points in each shape are disturbed. For disturbing the correspondence assignments each pair of shapes that is to be aligned is considered independently. For that, a proportion of ${\nu \in [0,1]}$ points from the total number of $n$ points is selected. Then, as correspondences between the pair of point-clouds $\matX_i, \matX_j \in \R^{n \times 2}$ are implicitly given by the ordering of the rows, the rows corresponding to the previously selected points are reordered randomly in one of the point-clouds, directly resulting in disturbed correspondence assignments between the pair of point-clouds $\matX_i, \matX_j$. 

In Fig.~\ref{gppWrongAssignments} the reference-based, the iterative mean shape-based, the synchronisation-based and the stratified solution of the GPP with wrong correspondences are shown for different levels of deformation and noise. On the right of Fig.~\ref{gppWrongAssignments} examples of the correspondences between pairs of shapes are depicted for different values of $\nu$. 
 
It can be seen that for different levels of deformation and different levels of noise with $70\%-80\%$ of wrong correspondences the outcome is only marginally affected when using our proposed method. \add{In contrast, all other evaluated methods result in significantly larger errors, which can be explained by the fact that our method is the only one that is able to make use of the information that is contained in all pairwise transformations.}

\section{Conclusion}
The alignment of multiple (corresponding) point-clouds simultaneously is generally tackled by iteratively aligning all point-clouds to some reference. Whereas this approach is biased (selecting a fixed reference) or initialisation-dependent (using the adaptive mean as reference) we have presented a method that is completely unbiased and does not depend on initialisation.

Our key observation is that the underlying noise-free transformations can be retrieved from the null space of a matrix that can directly be obtained from pairwise alignments. Whilst related approaches for rotation matrices \cite{Hadani:2011tw,Hadani:2011hb,Singer:2011ba} or permutation matrices \cite{Pachauri:2013wx} have been proposed, we have generalised the synchronisation method to handle general linear and affine transformations as well as similarity, euclidean and rigid transformations. \add{Experimentally we were able to demonstrate that the proposed method is able to effectively reduce noise from the set of pairwise transformations and to solve the Generalised Procrustes Problem at least as good as existing approaches for the missing data case whilst significantly outperforming other methods for the presented wrong correspondence case.}

 %

%

%

\ifcvprfinal
\section*{Acknowledgements} 
Supported by the Fonds National de la Recherche, Luxembourg (5748689, 6538106, 8864515).
\fi

{\small
\bibliographystyle{ieee}
\bibliography{references}
}

\end{document}